\documentclass[twocolumn]{article}
\usepackage[totalwidth=490pt,totalheight=680pt]{geometry}

\usepackage{cite}
\usepackage{graphicx}
\DeclareGraphicsExtensions{.eps}
\usepackage{amsmath}
\usepackage{amssymb}
\usepackage{amsthm}
\usepackage{bbm}

\newtheorem{prop}{Proposition}
\usepackage[tight,footnotesize]{subfigure}
\usepackage{authblk}

\begin{document}

\title{Learning From Missing Data Using Selection Bias in Movie Recommendation}

\author[1]{Claire Vernade}
\author[2]{Olivier Capp\'e}
\affil[1,2]{LTCI, CNRS, T\'el\'ecom ParisTech, Universit\'e Paris-Saclay}
\date{}
\renewcommand\Authands{ and }

\maketitle
	
	\begin{abstract}
	Recommending items to users is a challenging task due to the large
	amount of missing information. In many cases, the data solely consist of
	ratings or tags voluntarily contributed by each user on a very limited subset
	of the available items, so that most of the data of potential interest is
	actually missing. Current approaches to
	recommendation usually assume that the unobserved data is missing at random.
	
	In this contribution, we provide statistical evidence that existing movie
	recommendation datasets reveal a significant positive association between the
	rating of items and the propensity to select these items. We propose a
	computationally efficient variational approach that makes it possible to
	exploit this selection bias so as to improve the estimation of ratings from
	small populations of users. Results obtained with this approach applied to
	neighborhood-based collaborative filtering
	illustrate its potential for improving the reliability of the recommendation.
\end{abstract}

\section{Introduction} 

Since the early 1990's, automated methods for recommending content to users
based on historical data has been an active line of research in connection with
the widespread deployment of online services
\cite{goldberg1992using,resnick1997recommender}. The Netflix prize
\cite{bennett2007netflix,bell2007lessons} was a recent highlight that triggered off a lot of
attention on movie recommendation. In this contribution, we consider settings
that are typical of the collaborative filtering paradigm in which the available
information can be summarized by the list of ratings of some ``items'',
contributed voluntarily by the users. The aim is to exploit these ratings
originating from the whole population so as to recommend items to a specific
user, based on his/her own historical data \cite{herlocker1999algorithmic,herlocker2002empirical}. In practice, these methods are
rarely used alone and can be complemented by using item and/or user metadata or
features so as to improve the prediction.

A frequent abstraction used in this field consists in viewing the data
available at some point in the process as a very large matrix of
ratings, 
whose rows correspond to users and columns to items, that is incompletely
observed. In typical datasets available in movie recommendation, the number of
missing entries from this matrix is two orders of magnitude larger than the
number of entries that are actually observed. The main challenge is thus to
extrapolate the observed ratings despite the very large fraction of missing
data. To do so, it is a standard practice to consider only the ratings that
have been actually observed; the ratings that have not been observed being
simply ignored. Doing so means that the sampling distribution, under which the
ratings in the matrix are revealed by the data is viewed as a nuisance
parameter rather than as an aspect of the data that could be use for estimation
---see, e.g., \cite{tienmai2015bayesian} in
the context of matrix completion.

Our goal with this paper is to investigate the gain achievable by exploiting
the ``selection bias'' that is present in available movie recommendation
datasets. This bias consists in a significant positive association between the
rating of items in a given population and the natural propensity of this  population
to select these items. We will show in Section~\ref{sec:obs-data} below that
this observation is robust, being present both at the scale of whole datasets
but also in ratings corresponding to small sub-populations of users. To leverage this observation in a manner that stays computationally
feasible in realistic scenarios, we will use a simple convex variational
criterion that captures the main features of the relationship between the
ratings and the item popularity. To illustrate the approach, we will specialize
it to the case of neighborhood-based collaborative filtering in
which the preferences of the user is extrapolated from the population of users
that are closest to him/her given his/her historical ratings.

Taking into account the popularity of items in order to improve the
recommendation has been considered before by \cite{bell2007scalable} who design
a greedy sequential preprocessing procedure aimed at subtracting different
explanatory effects that may have an influence on the data. These 
include standard user and item rating effects as well as popularity ---referred
to as ``support'' in \cite{bell2007scalable}--- and time-related
effects. Interestingly, \cite{bell2007scalable} uses successive Bayesian
regressions for each effect to reduce the variability inherent in using a
linear model with many missing observations. However, our finding that the item
rating and the popularity effects are strongly positively associated suggests
that these should not be considered as uncorrelated effects that can be be
successively eliminated in a linear regression model.

A more comprehensive model of an informative selection bias in recommendation
has been investigated earlier by \cite{marlin2003modeling} who proposed to use
generative probabilistic models involving both the observed ratings as well as
latent variables. The latent variables correspond to unobservable explanatory
variables that can influence both the fact that a particular rating is
available as well as its value, allowing to model Not Missing At Random (NMAR)
data in the sense of \cite{rubin1987statistical}. In
\cite{marlin2012collaborative}, the authors report the results of experiments
that support the necessity of such a modeling by showing a significant
mismatch between the empirical distributions of voluntary ratings of
user-selected songs (listened through Yahoo! Music's LaunchCast Radio) and
ratings of system-selected songs by users recruited to participate to the
experiment.

In contrast to \cite{marlin2003modeling}, we do not intend to model
explicitly the missing data mechanism nor to introduce latent variables
representing implicit categories of the population of users. The proposed
approach consists in deriving a simple regularization (or penalty) term that
incorporates some general knowledge about the selection bias. This
regularization term can be used with any recommendation method expressed as
the solution of a variational criterion that involves the true population
average of the ratings. In the context of this paper, we only consider the case
where the regularization term is used to improve the estimation of movie
ratings from small samples of the population in neighborhood-based prediction.

\begin{figure*} \centering
	\subfigure[MovieLens1M]{\includegraphics[width=5cm]{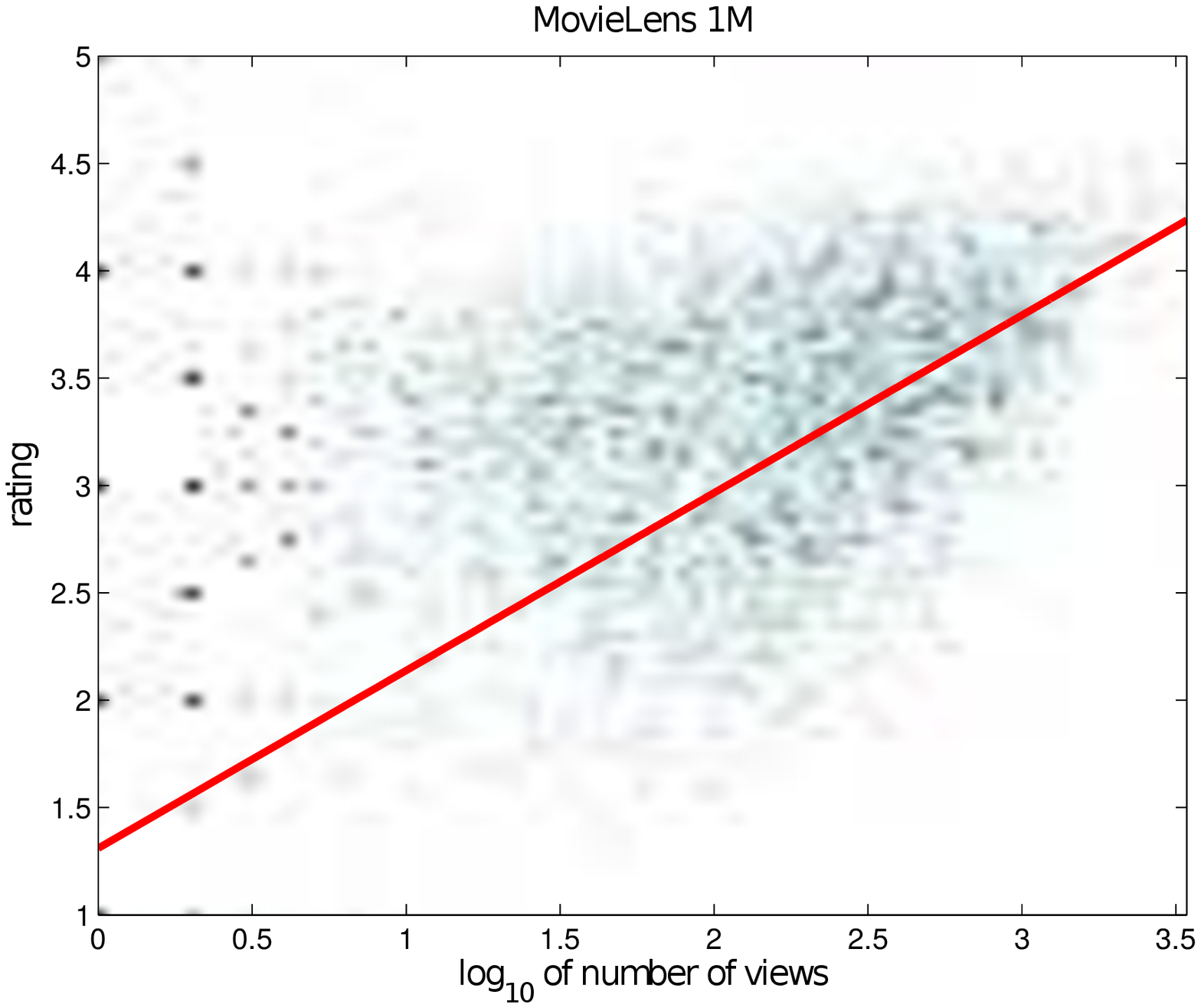}\label{ML1M-obs}}
	\subfigure[MovieLens10M]{\includegraphics[width=5cm]{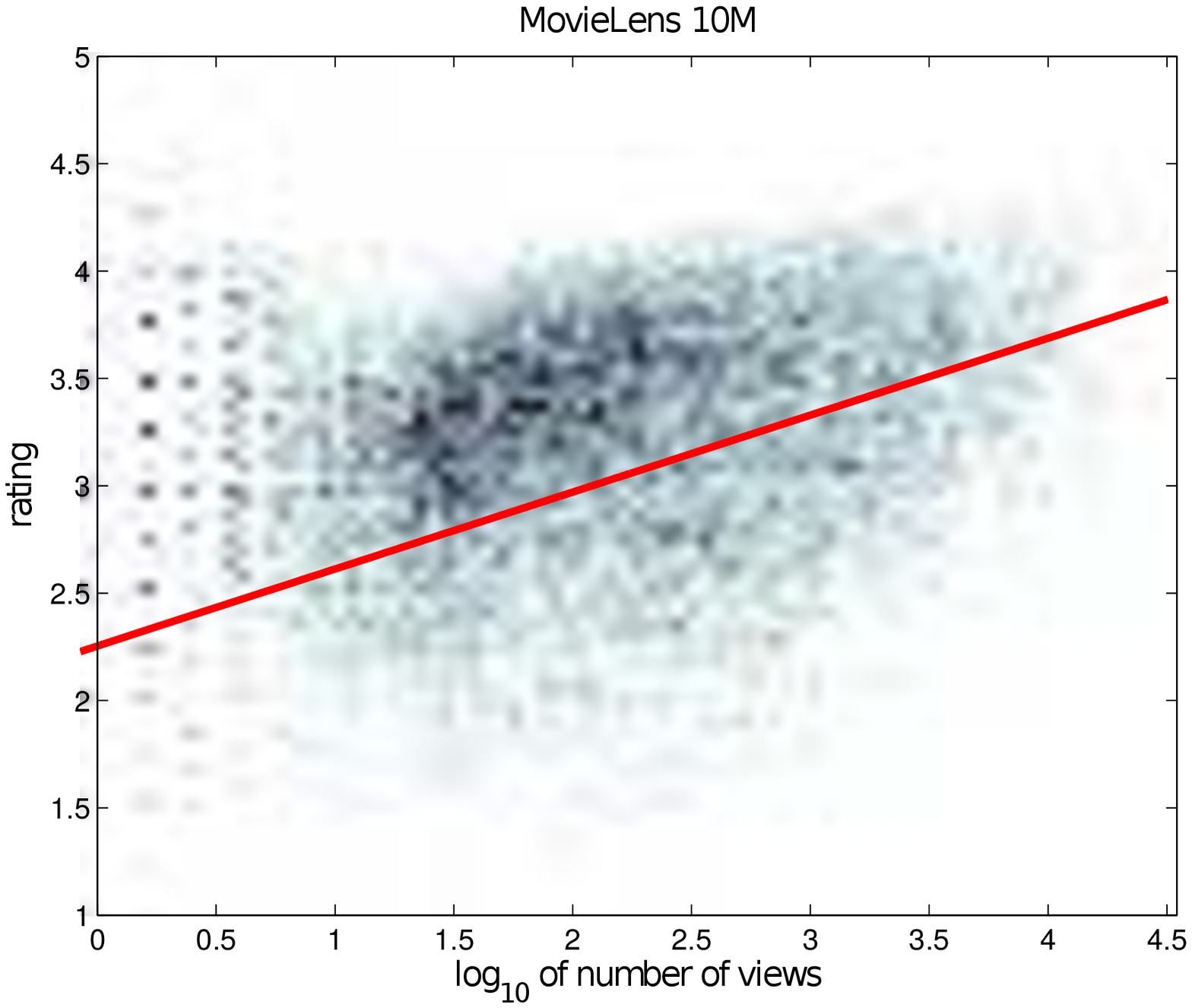}\label{ML10M-obs}}
	\subfigure[Netflix]{\includegraphics[width=5cm]{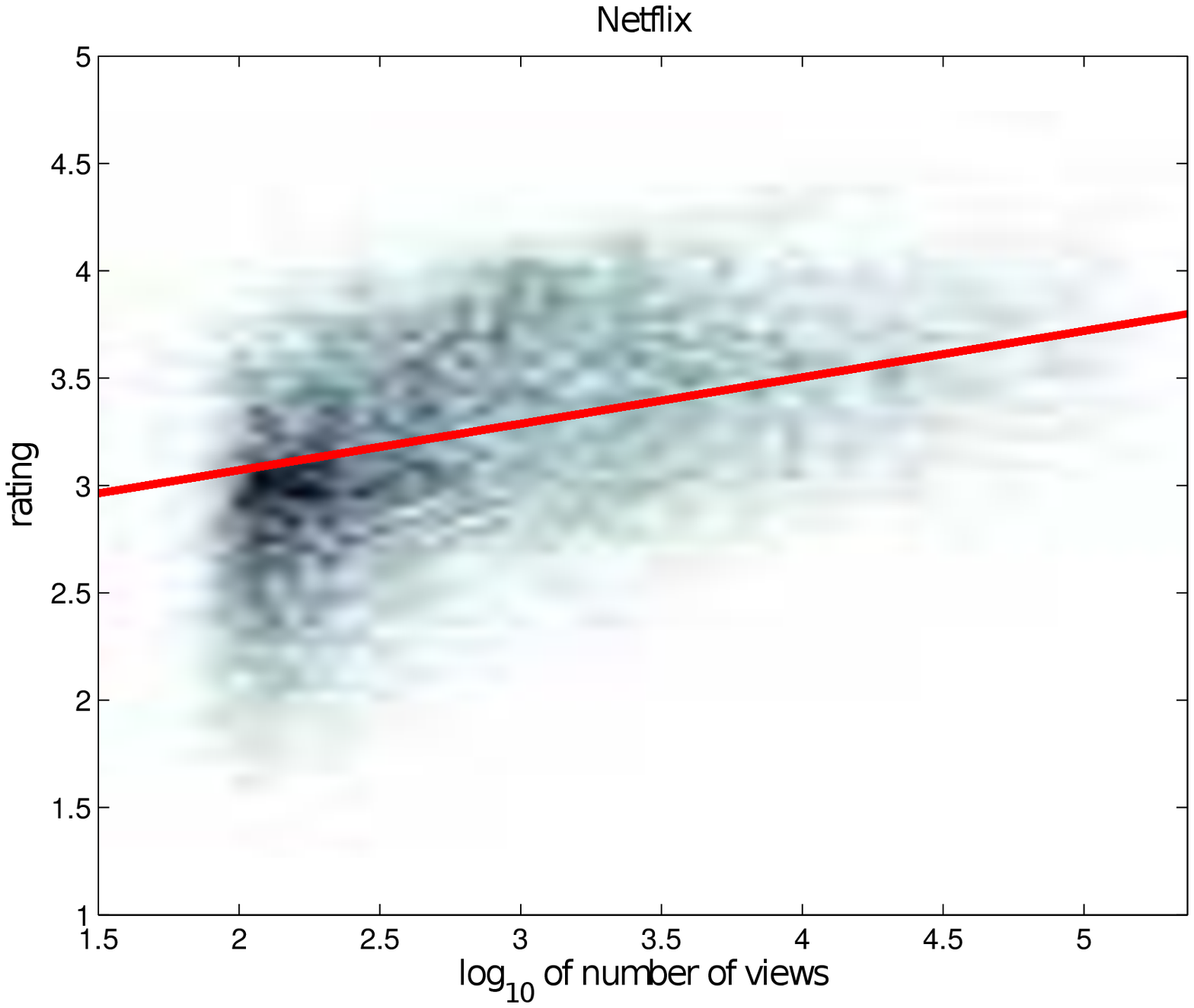}\label{Netflix-obs}}
	\caption{Scatterplot of average movie rating as function of the base ten
		logarithm of the number of views. The red line indicates the fitted
		regression curve. From left to the right: MovieLens 1M, MovieLens 10M and
		Netflix.}
	\label{fig:obs_all_datasets}
\end{figure*}

In Section \ref{sec:obs-data}, we provide a quantification of the selection
bias phenomenon on the MovieLens and Netflix datasets. Section \ref{sec:model}
describes our variational approach for learning ratings taking into account
this selection bias. Sections~\ref{sec:simu-data} and~\ref{expe_real_data}
provide, respectively, numerical experiments on simulated data and on the MovieLens dataset.

\section{Characterizing the Selection Bias \label{sec:obs-data}}

In this Section, we present statistical observations made on well-known movie
rating datasets showing significant positive association between
the popularity of movies and their ratings.

Figure~\ref{fig:obs_all_datasets} displays scatterplots of movie average rating
as a function of the number of ratings corresponding to, from left to right,
the MovieLens 1M (6k users, 3.7k movies, 1M ratings); MovieLens 10M (70k users,
11k movies, 10M ratings); and, Netflix (480k users, 17.8k movies, 100M ratings)
datasets. The y-scales of the three subplots of Fig.~\ref{fig:obs_all_datasets}
are directly comparable and correspond to a 1-5 scale where 5 corresponds to the
highest possible rating\footnote{MovieLens ratings are half integers from .5 to
	5 and Netflix ratings are integer-valued from 1 to 5. Given the variance of
	the ratings ---which is about 1---, the difference between both sorts of
	ratings is not significant. In the following we treat the ratings as
	continuous Gaussian random variables with homoscedastic variance.}. The
number of ratings are plotted on the x-scale using a base ten log scale. 
In particular, the number of ratings obtained by the most rated movies is observed
to be roughly proportional
to the overall number of ratings, which increases by a factor ten when going
from one plot to the next. It is important to keep in mind that the leftmost
part of each plot ---especially for the two MovieLens datasets--- corresponds
to very small samples (movies with less than ten ratings when the x-value is
smaller than one) and should not be considered as individually reliable. From
Fig.~\ref{fig:obs_all_datasets}, it is observed that despite their
differences\footnote{One difference is that all movies included in the Nextflix
	dataset have been rated by at least fifty users, as can be observed on the
	rightmost subplot of Fig.~\ref{fig:obs_all_datasets}.}, the three datasets do
show the same general pattern that the highest rated movies are also among the
most ``popular'' ones (i.e., those with highest numbers of ratings). Given the relatively small spread of ratings on the y-axis
(for MovieLens 10M for instance, half of the movie ratings are between 2.8 and
3.6), this positive association between the popularity of movies and their
rating is rather significant; we will refer to this phenomenon as the
\emph{selection bias}.

The observed positive association does not imply a direct causality
relationship. However, it is to be expected that this observation applies, to
some extent, to all situations where rating an item is a voluntary action taken
by the user.
The important point is that the popularity of an item
is in itself a valuable information even when the objective is to estimate the
population average of its rating. To illustrate this fact, assume that movies A
and B have comparable average ratings but that these ratings were obtained from
$n_A$ and $n_B$ users, where $n_B$ is larger than $n_A$. Standard statistical
arguments suggest that the rating of item $B$ is more reliable than that of
item $A$ because it has been estimated from more users. The positive association
reinforces this observation by making the hypothesis that the actual population
rating of $B$ exceeds that of $A$ more likely, as it has been selected more
often. This effect will be most likely negligible if $n_A$ is itself large as
the statistical error in estimating the population rating of item A is small
anyway in this case. On the other hand, when dealing with \emph{small samples}
---when $n_A$ is, say, less than fifty---, taking into account the selection
bias can be significant. The issue of small samples is inherent to
recommendation. For instance, 32\% of the 10k movies of MovieLens 10M have been
rated by less than 50 users, despite the fact that the most popular movie has
been rated by about half of the users. When one wants to go
beyond recommending the highest rated items based on the whole population, the
explicit or implicit use of sub-populations of users will necessitate a proper
handling or small samples, even when the complete database is very large.

In the rest of this section, our objective is twofold. First, we make
the previous comments more formal by providing a quantitative measure of the
association between the popularity and the rating. Next, our aim is to do so in
a way which can be exploited in a computationally efficient manner to improve
the estimation of movie rating based on small populations of users. This
objective will be addressed by use of linear regression.

Assuming the population can be considered as homogeneous, we define the
observations as the pairs $(X_{t},Y_{t})_{t=1,..,n}$, where $X_{t}\in\{1,\ldots
K\}$ is the selected movie and $Y_{t} \in \mathbb{R}$ denotes the rating of the
movie. $n$ and $K$ refer, respectively, to the total number of ratings and to
the number of rated items. We further assume that pairs can be considered as
independent and identically distributed so that the statistical model is
parameterized by
\begin{equation}
	\begin{cases}
		\theta_{k} = \operatorname{E}(Y_t|X_t = k) & \text{(expected item rating)}\\
		\lambda_k = \operatorname{P}(X_t = k) & \text{(item selection probability)}
	\end{cases}
\end{equation}
for $k=1,\dots,K$. Sufficient statistics for this model are given by
$$N_k=\sum_{t=1}^{n}\mathbf{1}_{\{X_{t}=k\}}$$ and
$$S_k=\sum_{t=1}^{n}Y_{t}\mathbf{1}_{\{X_{t}=k\}}$$
which are, respectively, the number of times item $k$ has been rated as well as the cumulated sum of its ratings.

The subplots of Fig.~\ref{fig:obs_all_datasets} display $S_k/N_k$ as a
function of $\log_{10}(N_k)$, for all items $k=1,\dots,K$. As noted earlier,
both the x- and y- values of this scatterplot correspond to statistics computed
from data and should thus be considered as noisy, which is clearly visible in
the left-hand part of each subplot. To account for this fact we use weighted
Total Least-Squares (TLS) -- or Deming -- regression to fit a symmetrized form of
the linear regression curve to the scatterplot. More precisely, denoting by
$x_k = \ln(N_k/n)$ and $y_k = S_k/N_k$ for $k=1,\dots,K$ we fit
$(a,b)$ by minimizing
\begin{equation}
	\sum_{k=1}^K (x_k - \hat{x}_k)^2/v_k + (y_k - \hat{y}_k)^2/w_k
	\label{eq:wtls}
\end{equation}
where $(\hat{x}_k, \hat{y}_k)$ is the orthogonal projection of $(x_k, y_k)$ 
onto the straight
line $y = a x + b$. Standard asymptotic statistical arguments show that, as $n$ tends to infinity, 
\begin{align*}
	& \sqrt{N_k}(x_k - \ln\lambda_k) \Rightarrow \mathcal{N}(0, 1 -\lambda_k) \\
	& \sqrt{N_k}(y_k - \theta_k) \Rightarrow \mathcal{N}(0, \sigma^2)
\end{align*}
where $\Rightarrow$ corresponds to convergence in distribution,
$\mathcal{N}(\mu,\upsilon)$ denote the Gaussian distribution with mean $\mu$
and variance $\upsilon$, and, $\sigma^2$ is the common variance of the ratings,
which can be estimated from the data. In light of these results and given the fact that most of the movies have a selection probability $\lambda_k$ that is much smaller than 1, we used as weights
\[
v_k = 1/N_k \quad \text{and} \quad w_k = \sigma^2/N_k
\]
The main effect of the weights $v_k$ and $w_k$ in~\eqref{eq:wtls} is to focus
the estimation on the most popular movies, whose average ratings should be
considered as being more precisely estimated. Note that the use of TLS also
makes the problem symmetric and one would obtain the same linear fit by
permuting the data associated to the x- and y- axis (which is of course not
true for standard linear regression which assumes that the data on the x-axis
is observed without noise). The weighted TLS regression estimate of $a$ and $b$
is obtained using the implementation of \cite{krystek2007weighted}.

\begin{table}
	\renewcommand{\arraystretch}{1.3}
	
	\label{tab:coeff_reg}
	\centering
	
	{\footnotesize \begin{tabular}{|p{5em}||c|c|c|}
		\hline
		Data & MovieLens 1M & MovieLens 10M & Netflix\\ \hline
		full dataset & 0.36 & 0.16 & 0.09 \\ \hline
		random subsets of size 100 & - & 0.27 & 0.15 \\ \hline
		
	\end{tabular}}
	\caption{Slope $a$ estimated by weighted TLS on the three datasets MovieLens 1M, MovieLens 10M and Netflix (see Fig.~\ref{fig:obs_all_datasets}) and median slopes estimated on random subsets of MovieLens 10M and Netflix (see Fig.~\ref{fig:subsampling}).}
\end{table}

The first row of Table~\ref{tab:coeff_reg} reports the values of $a$ estimated by
the above method for the three datasets. Corresponding regression lines are
shown in red on the three subplots of Fig.~\ref{fig:obs_all_datasets} (note
that the x-axis is there $\log_{10}(N_k)$ to make the interpretation of the
values easier). In all three cases, one obtains a significantly positive slope
$a$. When fitting the regression model on whole datasets it is also observed
that the estimated slope decays with the size of the dataset. This observation
can be related to the fact that as the size of the dataset increases, the
maximal number of ratings obtained by the most popular movies increases
proportionally while the rating scale stays unchanged.

\begin{figure} \centering
	\includegraphics[width=7cm]{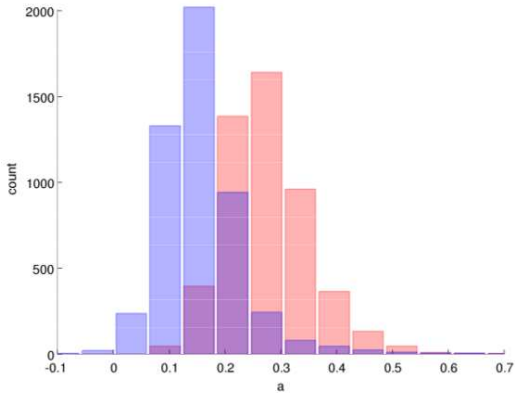}
	\caption{Histogram of slopes $a$ estimated on 5,000 independent random
		subsets of size 100; for MovieLens 10M (red) and Netflix (blue).}
	\label{fig:subsampling}
\end{figure}

This being said, even in very large datasets, recommendation methods will
explicitly (neighborhood-based) or implicitly (matrix factorization methods)
define sub-populations of reduced size that can be considered as
homogeneous. To illustrate the effect of considering sub-populations we
randomly drew $5,000$ random sub-populations from both the MovieLens 10M and
the Netflix datasets. On each of them we fitted a regression model using
weighted TLS. The size of each population was chosen to consist of the ratings
of $100$ users chosen at random. The selections of these users typically
correspond to subsets of 3.5k (MovieLens 10M) or 4.9k (Netflix) movies, with
number of ratings in the range 1--50, where about 43\% of the movies in the selection
have a single rating. The values of $a$ fitted on these sub-populations are
displayed as histograms on Figure~\ref{fig:subsampling} and the median value of
each histogram is reported in the second row of Table~\ref{tab:coeff_reg}. This
experiment shows that values of $a$ appropriate for sub-populations are higher
and usually in the range 0.1--0.35.

To understand the implication of these numbers, consider again our running
example: if $a=0.25$, it means that if we know that movie $A$ has been viewed
by $n_A$ users and movie $B$ by $n_B$ users, where $n_B$ is twice as large as
$n_A$, and in the absence of any other data, we should expect the average
rating of $B$ to be $0.25\ln(2) \approx 0.17$ higher than that of $B$. This
information is non-negligible, corresponding to roughly one sixth of the
standard deviation $\sigma$ of the ratings. In Section~\ref{expe_real_data}, we
will show that knowledge of this selection bias can indeed be leveraged to
improve the prediction of ratings in realistic scenarios.

\section{Using the Selection Bias as a Regularizer \label{sec:model}}

In this section, we derive an estimation criterion that corresponds to a
regularized likelihood estimator, where the regularization accounts for the
selection bias. We show that optimizing this criterion, which is both
smooth and convex, can be done efficiently using standard numerical optimization
tools.

\subsection{Variational Model}

As indicated before, we model the ratings of movie $k$ by a Gaussian
distribution
\[
p(Y\vert X=k;\theta)\sim\mathcal{N}(\theta_{k};\sigma^{2})
\]
where $\sigma^{2}$ is a fixed variance for all movies and $\theta = (\theta_k)_{k=1,\dots,K}$ is the vector of expected movie ratings. This is clearly not the
only option and the method could also be applied using the logit link function,
as in \cite{davenport20141, lafond2014probabilistic}, if we were given binary (``like/dislike'') ratings.

For, the selection probabilities $\lambda=(\lambda_k)_{k=1,\dots,K}$ it is
important to consider the logistic form of the multinomial distribution
parameterized by a vector $\beta$:
\[
\lambda_{k}=\frac{e^{\beta_{k}}}{\sum_{j=1}^{K}e^{\beta_{j}}}
\]
Up to a shift, $\beta_k$ is homogeneous to $\ln\lambda_k$ and hence to the
quantity displayed on the x-axis of Fig.~\ref{fig:obs_all_datasets}. Note that
the vector $\beta$ itself is only identifiable up to a shift as replacing all
$\beta_k$ by $\beta_k + \delta$ would leave the vector $\lambda$ of
probabilities unchanged due to the logistic normalization term. This lack of
identifiability is not a problem as estimating $\beta$ is not required and we
will treat $\beta$ as a so-called ``nuisance parameter'', optimizing it over
all possible values.

Our model of the selection bias thus relies on the assumption that $\beta$
---the log-probability of selecting each item (up to a constant)--- must be
close to $a\theta+b$ where $a$ and $b$ are global parameters that quantify the
selection bias, estimated following the method exposed in Section
\ref{sec:obs-data}. Due to the non-identifiability of $\beta$, we can disregard
the intercept $b$ and let $\beta$ be determined in the shift direction by the
value of $\theta$.

Thus, the problem of estimating $\theta$ boils down to jointly minimizing over $\theta$ and
$\beta$ the following cost function :
\begin{align}
	f\left((X_t,Y_t)_{t=1}^{n};\theta,\beta \right) = \, &  L_{1}\left(\left(Y_{t}\vert
	X_{t}\right)_{t=1}^{n};\theta\right) \nonumber \\ 
	& + L_{2}\left(\left(X_{t}\right)_{t=1}^{n};\beta\right) \nonumber \\ 
	& + r\Vert\theta-a\beta\Vert_{2}^{2}\label{eq3}
\end{align}

where
\begin{enumerate}
	\item $L_{1}$ is the negative conditional log-likelihood of the Gaussian model of the observed ratings;
	\item $L_{2}$  is the negative log-likelihood of the marginal distribution of the selections $(X_t)$;
	\item the last term is the regularization that
	constrains the ratings $\theta$ 
	to stay close to $a\beta$, the log-probability of selection scaled by
	$a$.
\end{enumerate}

The parameter $r > 0$ controls the influence of the regularization term and will typically be set using cross-validation on the training data.

\subsection{Inference Algorithm \label{subsec: algo}}

We first rewrite $L_{1}$ and $L_{2}$ :
\[
L_{1}=\sum_{k=1}^{K}\sum_{t=1}^{n}\mathbf{1}_{\{X_{t}=k\}}\frac{\left(Y_{t}-\theta_{k}\right)^{2}}{2\sigma^{2}}
\]
\[
L_{2}=-\sum_{k=1}^{K}\sum_{t=1}^{n}\mathbf{1}_{\{X_{t}=j\}}\beta_{k}+n\log\left(\sum_{j=1}^{K}e^{\beta_{j}}\right)
\]

Using the notations $N_k$ and $S_k$ introduced in Section~\ref{sec:obs-data}, one obtains
\[
L_{1}=\sum_{k=1}^{K}\frac{\theta_{k}^{2}}{2\sigma^{2}}N_k-\frac{\theta_{k}}{\sigma^{2}}S_k+C
\]
and
\[
L_{2}=n\log\left(\sum_{j=1}^{K}e^{\beta_{j}}\right)-\sum_{k=1}^K N_k\beta_{k}
\]
$C$ being a constant that does not depends on $\theta$.

The gradient of the objective function $f$ is the concatenation of the gradients with respect to $\theta$ and
$\beta$:
\[ \nabla f=\left[\begin{array}{c} \nabla_{\theta}f\\ \nabla_{\beta}f
\end{array}\right]\in\mathbb{R}^{2K}
\]
where
$\nabla_{\theta}f(k)=\frac{\theta_{k}}{\sigma^{2}}N_k-\frac{S_k}{\sigma^2}+2r(\theta_k-a\beta_k)$
and
$\nabla_{\beta}f(k)=-N_k+n\frac{e^{\beta_{k}}}{\sum_{j}e^{\beta_{j}}}-2ra(\theta-a\beta)$.

The Hessian has the following block structure 
\[ Hf=\left[\begin{array}{cc} H_{\theta\theta} & H_{\theta\beta}\\
H_{\beta\theta} & H_{\beta\beta}
\end{array}\right]
\]
where $H_{\theta\theta}=diag\left(N_k/\sigma^{2}+2r,k=1...K\right)$,
$H_{\theta\beta}=H_{\beta\theta}=diag(-2ra)$ and
\[ [H_{\beta\beta}]_{ij}=\begin{cases} n\lambda_{i}(1-\lambda_{i}) +2ra^2 &
\mbox{if }i=j\\ -n\lambda_{i}\lambda_{j} & \mbox{otherwise}
\end{cases}
\]

\begin{prop}
	Assuming that all counts $(N_k)_{k=1,\dots,K}$ are strictly positive, the criterion $$ f\left(
	(X_t,Y_t)_{t=1}^N\vert \theta , \beta \right) $$ is strictly convex with
	respect to $\left(\theta, \beta \right)$.
\end{prop}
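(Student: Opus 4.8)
The plan is to prove strict convexity by showing that the Hessian $Hf$ displayed above is positive definite at every point $(\theta,\beta)\in\mathbb{R}^{2K}$; since $f$ is twice continuously differentiable on the convex set $\mathbb{R}^{2K}$, an everywhere positive definite Hessian implies strict convexity. Concretely, I would fix an arbitrary nonzero direction $(u,v)\in\mathbb{R}^{2K}$ and expand the quadratic form $(u,v)^\top Hf\,(u,v)$ using the three blocks $H_{\theta\theta}=\mathrm{diag}(N_k/\sigma^2+2r)$, $H_{\theta\beta}=H_{\beta\theta}=-2ra\,I$, and $H_{\beta\beta}=n(\mathrm{diag}(\lambda)-\lambda\lambda^\top)+2ra^2 I$. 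Although $H_{\beta\beta}$ depends on the current $\beta$ through $\lambda=\lambda(\beta)$, the argument below goes through at every point.

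The one genuine manipulation is to complete the square in the contribution of the regularizer: the cross term $-4ra\,u^\top v$ together with $2r\|u\|_2^2$ (from $H_{\theta\theta}$) and $2ra^2\|v\|_2^2$ (from $H_{\beta\beta}$) assemble into $2r\|u-av\|_2^2$, yielding the identity
\[
(u,v)^\top Hf\,(u,v) \;=\; \sum_{k=1}^{K}\frac{N_k}{\sigma^2}\,u_k^2 \;+\; 2r\,\|u-av\|_2^2 \;+\; n\Bigl(\sum_{k}\lambda_k v_k^2 - \bigl(\sum_{k}\lambda_k v_k\bigr)^2\Bigr).
\]
All three summands are non-negative: the first because every $N_k>0$, the second because $r>0$, and the third because it is $n$ times the variance of $v$ under the probability vector $\lambda$ (non-negative by Cauchy--Schwarz / Jensen). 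To conclude I would show the sum vanishes only for $(u,v)=0$: the first term forces $u=0$ (using $N_k>0$), after which the second term reduces to $2ra^2\|v\|_2^2$, which forces $v=0$ since $r>0$ and $a\neq0$. Hence the form is strictly positive on every nonzero direction.

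The point to be careful about — and what I would flag as the main, if modest, obstacle — is the role of $a\neq0$. On its own the multinomial block $H_{\beta\beta}$ is only positive \emph{semi}-definite: the all-ones vector lies in the kernel of $\mathrm{diag}(\lambda)-\lambda\lambda^\top$, which is precisely the shift non-identifiability of $\beta$ noted in Section~\ref{sec:model}. So the strict positivity in the $\beta$ directions must be carried by the coupling term $r\|\theta-a\beta\|_2^2$, and that term degenerates when $a=0$. This is harmless here, since $a$ is estimated to be strictly positive in Section~\ref{sec:obs-data}, but it is worth stating explicitly; note also that the multinomial variance term is not actually needed for the argument. Everything else (the block expansion, completing the square, the variance bound) is routine. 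As an alternative that avoids the Hessian entirely, one can split the strict-convexity inequality into two cases for distinct points $(\theta_1,\beta_1)\neq(\theta_2,\beta_2)$: if $\theta_1\neq\theta_2$, strict convexity of the quadratic $L_1$ in $\theta$ suffices; if $\theta_1=\theta_2$, strict convexity of $\beta\mapsto r\|\theta_1-a\beta\|_2^2$ (again using $a\neq0$) suffices — with $L_2$ supplying ordinary convexity in both cases.
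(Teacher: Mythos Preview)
Your proof is correct and rests on the same structural idea as the paper's: decompose $Hf$ into the Hessians of $L_1$, $L_2$, and the regularizer, each positive semidefinite, and check that their null spaces intersect only at the origin. The execution differs slightly. The paper handles the regularizer by separability in $k$, reducing to a $2\times2$ Hessian whose eigenvalues and eigenvectors it computes explicitly, and then argues (somewhat tersely) that the null direction $(0,\mathbf{1})$ of $L_1+L_2$ is not spanned by the regularizer's null directions $(ae_k,e_k)$. You instead expand the full quadratic form and complete the square to obtain the identity $\sum_k (N_k/\sigma^2)u_k^2 + 2r\|u-av\|_2^2 + n\,\mathrm{Var}_\lambda(v)$, from which positive definiteness is read off directly. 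Your route is a bit more transparent, and you are right to flag explicitly that $a\neq0$ is required: the paper's argument also hinges on this (the ``do not span'' claim fails when $a=0$) but leaves it implicit.
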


\begin{proof}
	Assuming $N_k > 0$, $L_1$ and $L_2$ are known to be strictly convex wrt. to,
	respectively, $\theta$ and $\beta$ up to the already mentioned
	identifiability issue for $\beta$ (the Hessian of $L_2$ has $\beta = (1,
	\dots,1)^T$ as null direction).
	
	For the regularization term, the function $$(\theta, \beta)\in
	\mathbb{R}^K\times \mathbb{R}^K \mapsto \Vert \theta -a\beta \Vert^2 $$ being
	separable in $k$, it is sufficient to consider the case where $K=1$, i.e when
	$\left(\theta,\beta \right)\in \mathbb{R}^2 $. In that case, the Hessian of the regularization term reduces to the following 2 by 2 matrix:
	\[ H(\theta, \beta)=\left[\begin{array}{cc} 2 & -2a  \\
	-2a & 2a^2
	\end{array}\right]
	\]
	The eigenvalues of this matrix are $0$ and $2(a^2+1)$ and the associated
	eigenvectors are respectively $(a,1)$ and $(-a,1)$. Hence the Hessian of
	$\Vert \theta -a\beta \Vert^2$ is a positive matrix. Its $K$ null directions
	(vectors of the form $\theta=(0, \dots, 0, a, 0, \dots, 0), \beta=(0, \dots,
	0, 1, 0, \dots, 0)$) do not span the null direction of $L_2$ and hence $Hf$
	is positive definite.
\end{proof}

The parameter inference can thus be performed using fast converging algorithms
like Newton-Raphson. In most scenarios however, the large size of the Hessian
matrix ---equal to twice the number of items--- makes its storage and
inversion cumbersome. For the experiments reported in the following we thus
used open source implementations of the Limited-memory BFGS (L-BFGS)
approach that yields comparable performance using a very small memory footprint
(typically of the order of ten times the number of items in our case).

\section{Validation: experiments on simulated data \label{sec:simu-data}}

To illustrate the behavior of the method, we start by considering a small-scale
simulated scenario in which the data is generated using a probabilistic model
related to the variational criterion proposed in~(\ref{eq3}). We first show
that in this situation, the proposed approach ---referred to as SB (for
Selection Bias) in the following--- is indeed able to recover the true mean
rating parameter more efficiently than the least squares (LS) estimator that
estimates $\theta_k$ directly by the empirical average $S_k/N_k$\footnote{Note that the
	LS estimator may also be interpreted as the solution of~(\ref{eq3}) when the
	regularization parameter $r$ is set to zero.}. We also
discuss the robustness of the approach with respect to the parameters $a$, $r$,
as well as, the size $n$ of the sample.

For simulating the data, we select an arbitrary vector of mean ratings
$\theta^*$ of size $K$ and generate the associated vectors of logistic
parameters according to $\beta^* \sim \mathcal{N}\left(\theta^*/a,\sigma_b
\right)$. The data is then simulated according to
\begin{align*}
	& X_t\sim \mathcal{M}ult(\lambda^*)\\
	& Y_t\vert X_t=k \sim\mathcal{N}(\theta^*_{k};\sigma^{2})
\end{align*}
for $t=1,\dots,n$, where $\lambda^*_{k}= e^{\beta^*_{k}}/(\sum_{j=1}^{K}e^{\beta^*_{j}})$.

The parameters of the simulation are selected to be somewhat comparable to the
observations made on real data in Section~\ref{sec:obs-data}: values of
$\theta_k$ in the range 1--5, $a=0.35$, $\sigma=1$. The value of $\sigma_b$
is set to 1. For illustration purpose we use a small value of $K$, $K=3$,
varying the number $n$ of ratings between 20 and 2,000. Note that due to the
relatively high values of $a$ and of the ratings spread (see
Fig.~\ref{fig:bar_theta} below) the best rated, and hence most popular, of the
three items is typically forty times more frequent than the lowest rated
one. Hence, the value of $N_k$ for the lowest rated item is usually rather
small, in the range between 1 to 50, depending on the value of $n$\footnote{To
	ensure that the minimizer of~(\ref{eq3}) is uniquely defined, only the
	simulations for which all $N_k$, for $k=1, \dots, 3$, are strictly positive
	are retained.}.

\begin{figure}[hbt]\centering
	\includegraphics[width=8cm]{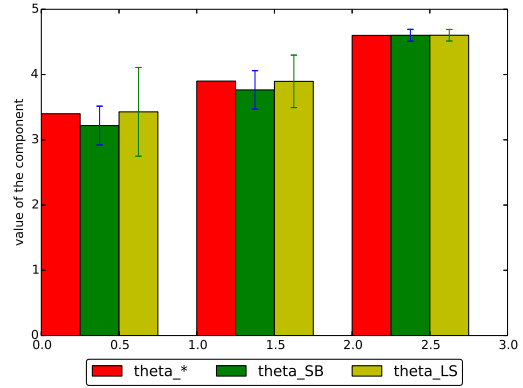}
	\caption{Recovery of the parameter $\theta$: comparison of the Selection Bias
		(SB) and the Least Squares (LS) estimators. Red: true value $\theta^*$
		for $k=1, \dots, 3$; Dark green: mean value estimated by SB; Light green: mean value
		estimated by LS. The vertical whiskers represent the standard deviation of the estimates..}
	\label{fig:bar_theta}
\end{figure}

\subsection{Recovery of ratings}
Figure \ref{fig:bar_theta}, compares the results obtained by the Selection Bias
(SB) and Least Squares (LS) estimators for $n=2,000$ using 200 Monte Carlo
replications of the data $(X_t,Y_t)_{t=1,\dots,n}$. It is observed that the
value corresponding to the item that is simultaneously worst rated and least
selected (corresponding to $k=1$) is slightly under-estimated by the SB
estimator compared to LS, with a standard deviation of the estimator that is
also reduced. For the third item that is both highly rated and very frequent
the difference between both estimators becomes to be negligible. Taking into
account the selection bias in the SB estimator thus produces a downward bias
and reduced variability of the estimated mean rating for infrequent items.

\begin{figure}[hbt] \centering
	\includegraphics[width=8cm]{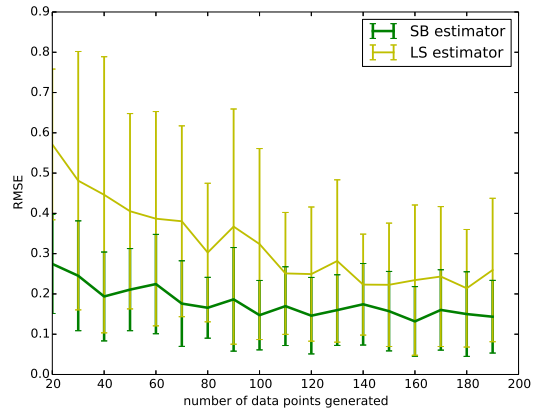}
	\caption{RMSE for SB and LS as a function of $n$, with corresponding error bars.}
	\label{fig:RMSE_N}
\end{figure}

\subsection{Robustness in small samples}

The effect observed on Fig.~\ref{fig:bar_theta} is all the more pronounced that
the sample size $n$ is small. To illustrate this fact, Figure \ref{fig:RMSE_N}
plots the Root Mean Square Error (RMSE) to the true value $\theta^*$, when $n$
increases from $20$ to $200$, computed from 50 Monte Carlo replications of the
data. The RMSE is the square root of the average value of $\Vert
\hat{\theta}-\theta^*\Vert^2$, where $\hat{\theta}$ denotes the estimated value
of $\theta$. Fig~\ref{fig:RMSE_N} confirms that the RMSE of the SB estimator is
always smaller than that of LS and that the gap between them increases for
small sample sizes (when $n$ is small). Note that the variability of the SB
estimator is also much reduced compared to that of LS.

\begin{figure} \centering
	\includegraphics[width=8cm]{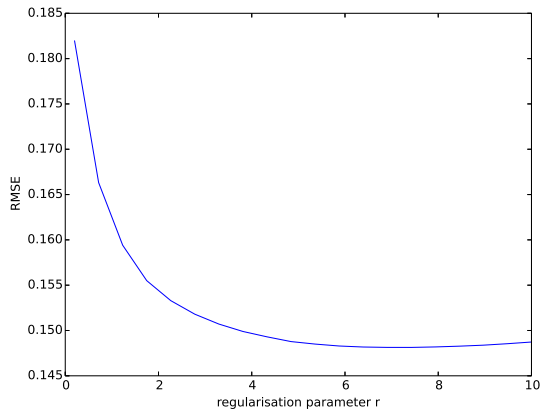}
	\caption{RMSE as a function of the regularization parameter $r$.}
	\label{fig:error_r}
\end{figure}

\subsection{Influence of $r$}
\label{subsec:simu-data:r}

The value selected for $r$ may affect the quality of the results. Figure
\ref{fig:error_r}, shows the RMSE obtained when $n=2,000$ for different values
of $r$ varying between 0.2 and 10. The curve is averaged over 200 Monte Carlo
replications. Here the optimal value of $r$ is $r=7$, which is rather high as
the simulation parameters almost satisfy $\beta^*=a^*\theta^*$, up to the
Gaussian perturbation of variance $\sigma_b^2$. Most importantly, the curve
displayed on Fig.~\ref{fig:error_r} is very smooth around its minimum with a
small curvature, meaning that values of $r$ between 5 and 10 yield, in this
case, a performance comparable to that corresponding to the optimal choice of $r$.

In Section~\ref{expe_real_data}, $r$ will be set by searching for the minimum
value of the RMSE on a validation subset corresponding to a small number of
users.

\begin{figure} \centering
	\includegraphics[width=8cm]{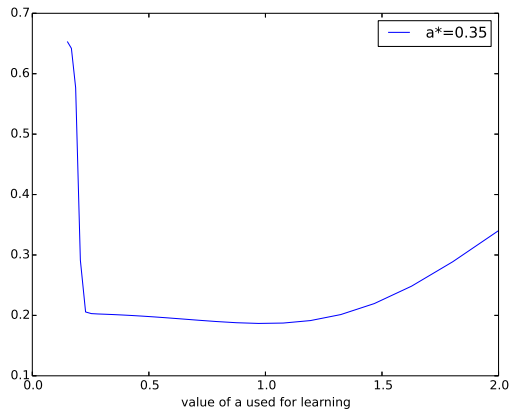}
	\caption{RMSE as a function of the parameter $a$ used in the objective function; the parameter value used to simulate the data is set to $a^*=0.35$.}
	\label{fig:influence_u1}
\end{figure}

\subsection{Influence of $a$}

The previous experiments have been carried out in the idealized setting where
the parameter $a$ that controls the generation of the data is known and used
for the inference. In realistic scenarios, $a$ will be known approximately only
and it is not advisable to try to estimate $a$ together with the other
parameters, in light of the variability observed on
Fig.~\ref{fig:subsampling}. Figure \ref{fig:influence_u1}
shows the RMSE when the parameter $a$ used in the objective
function~(\ref{eq3}) differs from the value $a^*$ used for simulating the data,
which is here fixed to $a^*=0.35$. It is observed that the RMSE barely varies
for $a\in (0.23,1.2)$, indicating that the SB estimator is very robust to the
overestimation of the slope $a$ and only requires that it be set high enough to
perform well.

In the experiments to be described in the next section, we also observed that
the results were very robust to the choice of $a$, with values in the range
0.25--0.5 yielding unnoticeable changes in overall performance.

\section{Experiments On Real Data Using Neighborhood-based Collaborative Filtering \label{expe_real_data}}

In this section, we describe recommendation experiments carried out on the
MovieLens dataset. The inference algorithm described in the previous
section is used to estimate the ratings of films selected by sub-populations of
users. Each sub-population corresponds to the neighborhood of the user for
which one wants to make recommendations. We first discuss the choice of the
similarity measure used to define the neighborhoods.

\subsection{Neighborhood construction}

The penalization scheme introduced in~(\ref{eq3}) is a generic tool designed to
improve the estimation of ratings from small samples by taking into account the
selection bias. In this section, we describe the simplest way in which this
method can be used in the context of recommendation. We consider a standard
neighborhood-based collaborative filtering approach in which~(\ref{eq3}) will
be used only to estimate the movie ratings from the sub-population of users
that belong to each neighborhood. The baseline approach usually considered in
the literature consists in using the empirical averages of the ratings in the
sub-population, that is, LS (Least Squares) following the terminology of
Section~\ref{sec:simu-data}. To allow for a meaningful comparison between the
proposed estimator (termed SB) and LS, we will use the exact same algorithm to
define the relevant sub-population of users that belong to the neighborhood of
the user for which we want to discover new relevant movies.

For the baseline to be significant we will use a state-of-the-art approach for
defining neighborhoods based on data-driven features of reduced
dimension. Dimensionality reduction is a popular method in recommendation.
Among the numerous algorithms that have been proposed to perform matrix
factorization, the Singular Value Decomposition (SVD) was shown quite early to
provide good performance (see for example \cite{bell2007lessons}). For the
experiments, we used an incremental implementation of SVD for the Julia
language that can handle large sparse data matrices \footnote{This library can
	be found at https://github.com/aaw/IncrementalSVD.jl }. Considering the
MovieLens 10M dataset, we first split randomly the data into a training set
containing 75~\% of the ratings of each user and a test set with the remaining
25~\%. A rank-25 SVD of the rating matrix (considering unobserved values as
zeroes) corresponding to training data was computed so as to determine a
representation of each user as a vector of features in a 25-dimensional space.

To define the neighborhood of a user whose feature vector is denoted by $u$,
we used the cosine similarity defined, for another vector $v$, by $c =
\left\langle u,v \right\rangle/\Vert u\Vert \Vert v \Vert$. The size of the
neighborhoods was set to 100 and hence the first 100 vectors $v$ with highest
cosine similarity with $u$ are included in the neighborhood of $u$. The typical
size of these neighborhoods is comparable to that of the random subsets
considered in Section~\ref{sec:obs-data}, that is, $K$ (number of movies viewed
in the neighborhood) and $n$ (number of ratings) of the order of a few
thousands.

Note that it would be very easy to use the similarity $c_t$ corresponding to each item as a weight in~(\ref{eq3}): simply redefining $N_k$ and $S_k$ as
\begin{align*}
	& N_k=\sum_{t=1}^{n}c_t \mathbf{1}_{\{X_{t}=k\}}\\
	& S_k=\sum_{t=1}^{n} c_t Y_{t}\times\mathbf{1}_{\{X_{t}=k\}}
\end{align*}
is equivalent to weighting each observation $(X_t,Y_t)$ in the likelihoods
$L_1$ and $L_2$ by $c_t$ rather than 1. For $r=0$, the solution of~(\ref{eq3})
when using this weighting will be the similarity-weighed average of each item's
ratings rather than the simple average. In our case however, the results
obtained with these similarity-weighted versions of the SB and LS estimators
were not significantly different from those obtained with the basic
(unweighted) versions and are not reported here.

\subsection{Choice of the parameters\label{sub:choice_r}}

The slope parameter $a$ was set to the value estimated in Section \ref{sec:obs-data} on random subsets of 100 users and was kept fixed through the experiments.

\begin{figure} \centering
	\includegraphics[width=8cm]{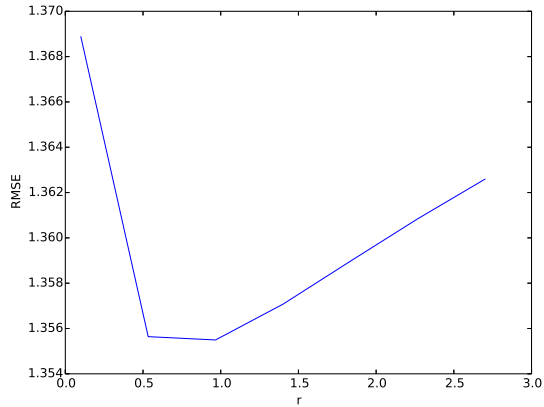}	
	\protect\caption{RMSE as a function of $r$, computed on
		the first 100 users of the base.}
	\label{fig:choosing_r}
	
\end{figure}

For the regularization parameter $r$, some tuning is necessary as observed in
Section~\ref{subsec:simu-data:r}. Here again, the value of $r$ is fixed
globally, using a common value for all neighborhoods, as tuning $r$ on small
populations is definitely not advisable and would require validation data for
each user. For that purpose, we conducted a preliminary experiment on a subset
of 100 users of the base and computed the averaged RMSE (see below) for
different values of $r$ between $0.1$ and $2.5$. The results are displayed on
Figure \ref{fig:choosing_r}. Similarly to the graph of Fig.~\ref{fig:error_r}
computed on simulated data, we observe that the quality of the recommendations
improves whenever $r>0$, with an optimum about $r=1$ which was used in the
following.

\subsection{Evaluation Metrics}

To evaluate the results, we used various metrics focusing on different aspects of the estimation.

The first criterion, RMSE, is aimed at quantifying the calibration of the
rating estimates provided by the algorithms. The RMSE, is the classical metric
that was used in the Netflix Challenge. If $\mathcal{T}$ denotes the set of
indices featured in the test set for a given user, we define
\[
\text{user RMSE} = \sqrt{\frac{1}{\vert \mathcal{T} \vert}\sum_{t\in\mathcal{T}}\left(Y_t - \hat{\theta}_{X_t}  \right)^2}
\] 
where $\hat{\theta}$ denotes the rating estimates determined on the training
data for this user\footnote{For a movie $k$ that was not selected in the user
	neighborhood but that is present in the user test set, we set by convention
	$\hat{\theta}_l = 3.5$, which corresponds to the empirical average of all
	ratings}. The RMSE is the average of the user-level RMSE for all users. It
should be noted that, in contrast to the criterion used in
Section~\ref{sec:simu-data} that weighted all items equally, the RMSE as
defined above does give more weights to the error corresponding to movies that
appear frequently.

In addition, the RMSE gives as much importance to accuracy in predicting
low ratings that it does for high ratings, whereas the latter is arguably more
relevant in the perspective of recommendation. For this reason, we also
consider the precision associated with the recommendation of a few number of
items that are believed to be relevant. Relevant items are defined as the
movies actually selected by the user in the test set \emph{and that were rated
	4 or above} (that is, 4, 4.5 or 5 for the MovieLens datasets).

The first measure is the standard Precision-at-N (denoted P@N) that assumes
that only $N$ items are to be recommended and counts the number of relevant
items (also called true positives) in those $N$ recommended items:
$$\text{P@N} =
\vert \text{relevant items} \vert / N$$
In this case, it is natural to
consider that the $N$ movies recommended by a method are those that have the
highest estimated ratings. However, P@N is fundamentally a ranking measure and
we will see below that due to the selection bias it is optimized by a very
simple heuristic that does not even rely on estimating the movie ratings.

To mitigate this observation, we also consider $\text{P@}\tau$ in which the set
of relevant items is determined by considering the movies for which the
estimated rating is above the threshold $\tau$, that is,
$$\text{P@}\tau =
\vert \text{relevant items} \vert / \vert \text{items with est. rating $\geq \tau$}\vert$$

Although, this second way of proceeding does not explicitly control the size of the recommendation set that corresponds to a given value of $\tau$, it is appropriate to measure the accuracy in predicting, in a calibrated way, high values of the ratings.

\subsection{Results} \label{subsec:results}

We present in this section the results obtained on the MovieLens 10M dataset. We computed RMSE, P@N and $\text{P@}\tau$ by selecting randomly and averaging results over 1,000 users.

\begin{table}[!t]
	\renewcommand{\arraystretch}{1.3}
	\caption{Results of the experiments on the MovieLens 10M dataset.}
	\label{tab:res_ML10M}
	\centering
	
	{\footnotesize \begin{tabular}{|c||c|c|c|c|}
		\hline
		& RMSE & P@N3 &  P@N14 & P@$\tau$4  \\
		\hline
		$\hat{\theta}(SB)$ & $\mathbf{0.923}$ & $\mathbf{0.183}$ & $\mathbf{0.125}$ & $\mathbf{0.0332}$  \\
		\hline
		
		$\hat{\theta}(LS)$ & 0.952  & 0.0022 &   0.0028 & 0.0211 \\
		\hline
		Popularity Ranking & - & 0.239 &  0.160 & -	 \\	
		\hline
	\end{tabular}}
\end{table}

The first column of Table \ref{tab:res_ML10M} shows that the SB estimator
improves the RMSE compared to LS. This improvement is significant, confirming
the good behavior of LS for rating estimation. The order of magnitude of the
improvement is limited but this is mainly due to the weighting by the popularity
of movies inherent to the RMSE computation. For less frequent movies, the
improvement brought by SB is indeed major as will be shown below. The remaining
columns of Table \ref{tab:res_ML10M} report the performance in term of the P@N
and $\text{P@}\tau$ metrics defined in the previous section. The values
selected for $N$ correspond to two realistic use cases that can be of interest
in movie recommendation: suggesting a top-3 short list ($N$=3) or building a
recommendation page on a website containing a human-sized list ($N$=14). For
$\text{P@}\tau$, $\tau=4$ was selected in light of the actual threshold used to
determine relevant items in the test set\footnote{Note however that due to the
	fact that the actual observed ratings are half integers, the value of the
	relevance threshold is not precisely defined.}. These results show that, when
it comes to identifying highly rated items, the SB estimator significantly
outperforms the standard empirical average (or LS) estimator.

Figure~\ref{fig:P@N_ML10M} gives more details by displaying the results
obtained for the P@N metric, for values of $N$ between 3 and 30. It is
important to keep in mind that the P@N metric being a ranking criterion it does
not measure the accuracy in evaluating the ratings but rather the ability to
produce a correct ordering of the movies. The general shape of the performance
curve for the SB estimator (blue curve) in Fig.~\ref{fig:P@N_ML10M} suggests
that it succeeds in putting at the top of the list the most relevant items,
with a precision that decreases as $N$ increases. In contrast, the red curve
that corresponds to the performance of the LS estimator shows that it largely
fails when it comes to ranking items. The fact that as $N$ grows, the P@N
metric increases (slightly) with $N$ for the LS estimator suggests that the
failure of LS comes from the fact that it can attribute high ratings to
irrelevant movies. As discussed in Section~\ref{sec:obs-data}, a significant
fraction of movies (almost half of them) rated in each neighborhood has been
rated only once. The LS estimation for these ``hapax'' items is extremely
noisy, being based on a single occurrence: it suffices that one of these be
rated at 5 (the maximal note) to perturb the highest rank of the recommendation
list. This interpretation is confirmed by the green curve of
Fig.~\ref{fig:P@N_ML10M} that corresponds to the performance of the LS
estimator, when restricted to the movies that were at least select twice in the
neighborhood. By raising the threshold value (above 2) the impact of unreliable
ratings could be further reduced and the ranking performance of the LS
estimator improved, at the price of a reduced diversity of the
recommendations. An alternative would be to use a Bayesian mean estimate, as in
\cite{bell2007scalable}, to shrink the estimates towards the global mean rating
for scarcely observed items. It is remarkable that the SB estimator does not
necessitate any adjustment of this sort: as discussed about
Fig.~\ref{fig:bar_theta}, the presence of the regularization term creates a
downward bias for ratings based on few observations, making it highly unlikely
that these unreliable ratings appear at the top of the ranking
list.

\begin{figure}[hbt] \centering
	\includegraphics[width=8cm]{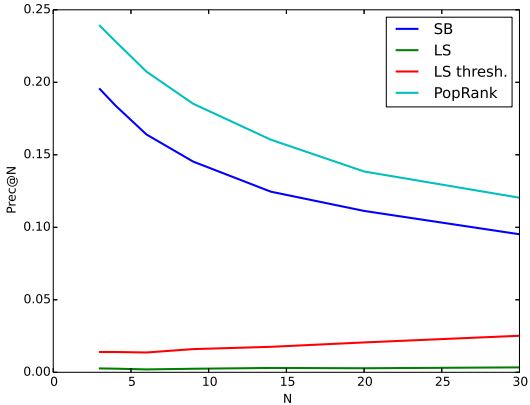}
	\protect\caption{Precision-at-N for $N$ varying in 3-30. The results are averaged over 1,000 users randomly selected among the ML10M database.}
	\label{fig:P@N_ML10M}
\end{figure}

Interestingly, in terms of the P@N metric, the SB estimator is dominated by the
simple heuristic (light blue curve in Fig.~\ref{fig:P@N_ML10M}) that ranks the
movies in the neighborhood according to their popularity and recommend the $N$
most popular ones. The fact that it is possible to recommend the highly rated
movies, without even using the rating data (except for the definition of the
features used to build the neighborhoods) is a clear illustration of the
selection bias phenomenon. This ``ranking by popularity'' approach is also
naturally immune against the variability due to movies with a small number of
ratings. This being said, this strategy is not calibrated and recommends items
whose value is not clearly defined. It is also likely that, in terms of the
diversity of the recommendations, always recommending the most popular
items in each neighborhood is not the optimal approach.

\begin{figure}[hbt]\centering
	\includegraphics[width=8cm]{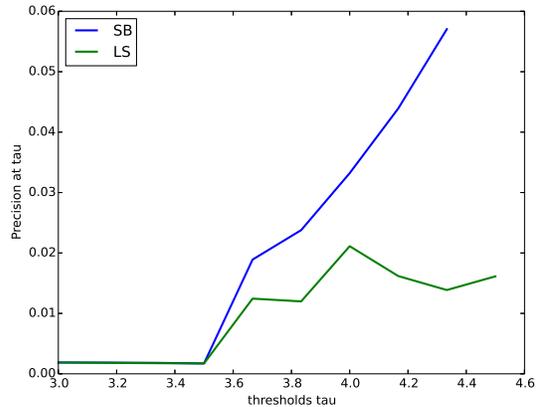}
	\protect\caption{Precision-at-$\tau$ for $\tau$ varying in 3-4.5. The results are averaged over 1,000 users randomly selected among the ML10M database.}
	\label{fig:P@tau_ML10M}
\end{figure}

The alternative consists in measuring the precision at a given threshold $\tau$,
as shown in Figure \ref{fig:P@tau_ML10M}: for thresholds $\tau$ between 3 and
4.5, both the LS and the SB estimators were used to create lists of recommended
items whose estimated rating exceeded $\tau$. The drawback of this approach is
that the size of the recommendation list is not explicitly controlled. In
particular, when interpreting the curves on Fig.~\ref{fig:P@tau_ML10M} it is
important to keep in mind that the size of the recommendation lists
corresponding to the same value of $\tau$ may in fact be different for the two
methods (SB and LS) under consideration. 

It is observed on Figure~\ref{fig:P@tau_ML10M} that up to $\tau=3.5$, which
correspond to the overall average rating, both estimators have comparably low performance as they recommended about half of the items present in the
training set for each neighborhood. When the threshold grows up to $4$, the
lists of recommendations for both SB and LS become more relevant as shown by
the increase of the $\text{P@}\tau$ metric. For the LS estimator however, the
precision values are decreasing for thresholds $\tau$ above 4, showing that a
significant fraction of the highly rated estimates is in fact strongly
contaminated by unreliable values. In contrast, the precision of the SB
estimator keeps improving as $\tau$ increases, showing once again that the
highly rated estimates are much more reliable with the SB approach.

\section{Conclusion}

In this paper, we introduced a model for the link between the probability of
selecting an item and its underlying rating. The corresponding
optimization-based estimator of the underlying rating effectively uses the two
available pieces of information about each item: the empirical frequencies of
selection and the empirical rating averages. The experiments performed on
simulated data showed that in the presence of a selection bias the proposed
estimator provides more reliable estimate of the underlying rating of
highly-rated items. Finally, the approach was used for collaborative filtering
on the MovieLens data showing a large improvement in terms of relevance of the
recommendation.

We believe that the proposed approach can be incorporated in other types of
recommendation algorithms. A first idea would consist in using the selection
bias regularization term, or a variant of it, in matrix completion methods
based on a variational criterion \cite{candes2010matrix}, including the cases
where a different noise model is considered
\cite{davenport20141,lafond2014probabilistic}. Another idea would be to incorporate the selection bias in the prior specification for methods based
on Bayesian modeling \cite{tienmai2015bayesian,gopalan2014bayesian}.

\bibliographystyle{IEEEtran}
\bibliography{biblio.bib}

\begin{thebibliography}{10}
\providecommand{\url}[1]{#1}
\csname url@samestyle\endcsname
\providecommand{\newblock}{\relax}
\providecommand{\bibinfo}[2]{#2}
\providecommand{\BIBentrySTDinterwordspacing}{\spaceskip=0pt\relax}
\providecommand{\BIBentryALTinterwordstretchfactor}{4}
\providecommand{\BIBentryALTinterwordspacing}{\spaceskip=\fontdimen2\font plus
\BIBentryALTinterwordstretchfactor\fontdimen3\font minus
  \fontdimen4\font\relax}
\providecommand{\BIBforeignlanguage}[2]{{%
\expandafter\ifx\csname l@#1\endcsname\relax
\typeout{** WARNING: IEEEtran.bst: No hyphenation pattern has been}%
\typeout{** loaded for the language `#1'. Using the pattern for}%
\typeout{** the default language instead.}%
\else
\language=\csname l@#1\endcsname
\fi
#2}}
\providecommand{\BIBdecl}{\relax}
\BIBdecl

\bibitem{goldberg1992using}
D.~Goldberg, D.~Nichols, B.~M. Oki, and D.~Terry, ``Using collaborative
  filtering to weave an information tapestry,'' \emph{Communications of the
  ACM}, vol.~35, no.~12, pp. 61--70, 1992.

\bibitem{resnick1997recommender}
P.~Resnick and H.~R. Varian, ``Recommender systems,'' \emph{Communications of
  the ACM}, vol.~40, no.~3, pp. 56--58, 1997.

\bibitem{bennett2007netflix}
J.~Bennett and S.~Lanning, ``The netflix prize,'' in \emph{Proceedings of KDD
  cup and workshop}, vol. 2007, 2007, p.~35.

\bibitem{bell2007lessons}
R.~M. Bell and Y.~Koren, ``Lessons from the netflix prize challenge,''
  \emph{ACM SIGKDD Explorations Newsletter}, vol.~9, no.~2, pp. 75--79, 2007.

\bibitem{herlocker1999algorithmic}
J.~L. Herlocker, J.~A. Konstan, A.~Borchers, and J.~Riedl, ``An algorithmic
  framework for performing collaborative filtering,'' in \emph{Proceedings of
  the 22nd annual international ACM SIGIR conference on Research and
  development in information retrieval}, 1999, pp. 230--237.

\bibitem{herlocker2002empirical}
J.~Herlocker, J.~A. Konstan, and J.~Riedl, ``An empirical analysis of design
  choices in neighborhood-based collaborative filtering algorithms,''
  \emph{Information retrieval}, vol.~5, no.~4, pp. 287--310, 2002.

\bibitem{tienmai2015bayesian}
T.~T. Mai and P.~Alquier, ``A bayesian approach for matrix completion: Optimal
  rate under general sampling distribution,'' \emph{Electronic Journal of
  Statistics}, vol.~9, pp. 823--841, 2015.

\bibitem{bell2007scalable}
R.~M. Bell and Y.~Koren, ``Scalable collaborative filtering with jointly
  derived neighborhood interpolation weights,'' in \emph{Proceedings of the
  Seventh IEEE International Conference on Data Mining (ICDM)}, 2007, pp.
  43--52.

\bibitem{marlin2003modeling}
B.~M. Marlin, ``Modeling user rating profiles for collaborative filtering,'' in
  \emph{Advances in neural information processing systems (NIPS)}, 2003.

\bibitem{rubin1987statistical}
R.~J.~A. Little and D.~B. Rubin, \emph{Statistical analysis with missing data},
  ser. Wiley Series in Probability and Mathematical Statistics: Applied
  Probability and Statistics.\hskip 1em plus 0.5em minus 0.4em\relax John Wiley
  \& Sons, Inc., New York, 1987.

\bibitem{marlin2012collaborative}
B.~Marlin, R.~S. Zemel, S.~Roweis, and M.~Slaney, ``Collaborative filtering and
  the missing at random assumption,'' \emph{arXiv preprint arXiv:1206.5267},
  2012.

\bibitem{krystek2007weighted}
M.~Krystek and M.~Anton, ``A weighted total least-squares algorithm for fitting
  a straight line,'' \emph{Meas. Sci. Technol.}, vol.~18, no. 3438--3442, 2007.

\bibitem{davenport20141}
M.~A. Davenport, Y.~Plan, E.~van~den Berg, and M.~Wootters, ``1-bit matrix
  completion,'' \emph{Information and Inference}, vol.~3, no.~3, pp. 189--223,
  2014.

\bibitem{lafond2014probabilistic}
J.~Lafond, O.~Klopp, E.~Moulines, and J.~Salmon, ``Probabilistic low-rank
  matrix completion on finite alphabets,'' in \emph{Advances in Neural
  Information Processing Systems (NIPS)}, 2014, pp. 1727--1735.

\bibitem{candes2010matrix}
E.~Cand{\'{e}}s and Y.~Plan, ``Matrix completion with noise,'' \emph{Proc.
  IEEE}, vol.~98, no.~6, pp. 925--936, 2010.

\bibitem{gopalan2014bayesian}
P.~Gopalan, F.~J. Ruiz, R.~Ranganath, and D.~M. Blei, ``Bayesian nonparametric
  poisson factorization for recommendation systems,'' in \emph{Proceedings of
  the Seventeenth International Conference on Artificial Intelligence and
  Statistics (AISTATS)}, 2014, pp. 275--283.

\end{thebibliography}
%

\end{document}